\begin{document}

\newtheorem{theorem}{Theorem}
\newtheorem{lemma}{Lemma}
\newtheorem{definition}{Definition}

\title{\textbf{Removing Algorithmic Discrimination \\(With Minimal Individual Error)}}
\author{El Mahdi El Mhamdi, Rachid Guerraoui, L\^{e} Nguy\^{e}n Hoang and Alexandre Maurer
\\EPFL
\\elmahdi.elmhamdi@epfl.ch, rachid.guerraoui@epfl.ch,
\\le.hoang@epfl.ch, alexandre.maurer@epfl.ch}

\date{}
\maketitle


\begin{abstract}
We address the problem of correcting group discriminations within a score function, while minimizing the individual error. Each group is described by a probability density function on the set of profiles. We first solve the problem analytically in the case of two populations, with a uniform bonus-malus on the zones where each population is a majority. We then address the general case of $n$ populations, 
where the entanglement of populations does not allow a similar analytical solution.
We show that an approximate solution with an arbitrarily high level of precision can be computed with linear programming. Finally, we address the inverse problem  where the error should not go beyond a certain value and we seek to minimize the discrimination.
\end{abstract}

\section{Introduction}

As machine learning is being deployed, a growing number of cases of discriminatory behaviors is being highlighted. In 2016,   a  study by ProPublica\footnote{See https://tinyurl.com/machine-bias-sentencing} showed that some algorithmic assessment of recidivism risks was significantly racially biased against black criminals. Indeed, 45\% of supposedly high-risk black criminals did not re-offend, as opposed to 22\% of supposedly high-risk white criminals. Conversely, 28\% of supposedly low-risk black criminals re-offended, as opposed to 48\% of supposedly low-risk white criminals. Such concerns for algorithmic discrimination have fostered a lot of work.

A major difficulty posed by new machine learning techniques is that algorithms may have learned their biases from high-dimensional data, which ironically seems hard to handle without machine learning. Racial inequalities in facial recognition have for instance been showed in \cite{buolamwini2018}. More disturbingly, it was discovered that the popular word2vec package \cite{mikolov2013} yields gender discriminative relations between word representations, e.g., $doctor-man+woman=nurse$. In other words, word2vec seems to infer from natural language processing that a man is to a woman what a doctor is to a nurse. Evidently, this is only one example out of many. Such examples illustrate the difficulty of mitigating algorithmic discrimination.

Many solutions have been proposed. Some consist in pre-processing data used for machine learning \cite{dxp2, dxp4, dxp5, dxp6, dxp15} or making it unbiased \cite{dxp7}. Some try to prevent discrimination during the learning phase \cite{dxp13,dxp14,dxp3}, by using causal reasoning \cite{dxp1}, or with graphical dependency models \cite{dxp16}.
Other approaches try to achieve independence from specific sensitive
attributes \cite{dxp8,dxp18,dxp17}.
Dwork et al. \cite{dxp19} introduced the concept of ``fair affirmative action'', to improve the treatment of specific groups while treating similar individuals similarly.
Algorithmic discrimination was also considered in problems of subsampling \cite{dxp9}, voting \cite{dxp10}, personalization \cite{dxp11} or ranking \cite{dxp12}.

All previous works highlighted the fundamental trade-off between \emph{group discrimination} (i.e., some groups being globally penalized compared to other groups) and \emph{individual accuracy} (i.e., individuals being judged with a high level of precision). In this paper, we propose a post-processing approach to remove group discrimination while minimizing the individual error\footnote{The social impact of removing group discrimination and the extent to which such an enterprise is desirable are out of the scope of this paper. We ``simply'' address the problem of doing it with a minimal error.}, as well as an approach to minimize group discrimination given an individual error constraint. 


More specifically, we assume that we are given a score function $f$ that computes a score $f(x)$ for each individual $x$. Here, the individual's profile $x \in S$ can be any sort of description of the individual. In simple settings, it may be a collection of real-valued features, i.e. $S = \mathbb R^n$, and the scoring function $f$ may be interpretable. However, as machine learning improves, rawer data are being used to score individuals, e.g. they may be textual biographies of undetermined length. In such cases, the scoring function $f$ is usually constructed via machine learning, and it often has to be regarded as some ``black box''. To remove group discrimination, rather than pre-processing raw data or modifying the learning phase, it may thus be simpler to perform some post-processing of the score function, i.e. deriving a non-discriminative score function $h$ from the possibly discriminative function $f$.

An additional difficulty is that the individual's profile $x$ may not clearly determine its sensitive features, e.g. gender or race. Nevertheless, evidently, even biographic texts may provide strong indications of the individual's likely sensitive features. A natural approach to analyze the dependency of the score function on sensitive features is to test its scoring on profiles that are representative of a certain gender or race. Interestingly, this approach can now be simulated using so-called generative models \cite{goodfellow2014,karras2017}. These models allow to draw representative samples of subpopulations of individuals. 

Thus, we assume that any population $i$ (women, men, black, white, \dots) can be described by some generative model. Formally, this corresponds to saying that the population $i$ is represented by a probability density function $p_i$ on $S$. Given $p_i$, we can determine the \emph{average score} of population $i$ (i.e., $\int_{x \in S} p_i(x) f(x) dx$), which can be well approximated by sampling the generative model associated to population $i$. A toy example of average score is given in Figure~\ref{fig:distrib}.

\begin{figure}
\begin{center}
\includegraphics[width=14cm]{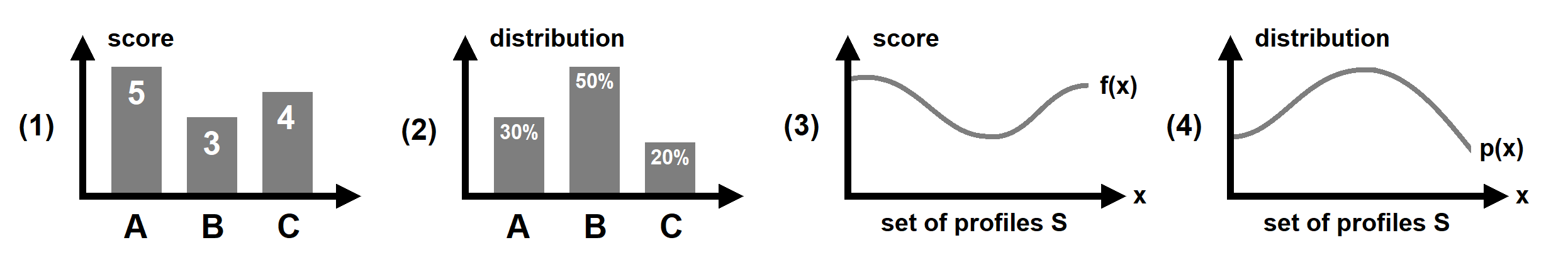}
\caption{Illustration of the average score of a given population. First, we consider a set of profiles $S = \{A,B,C\}$ (only $3$ possible profiles) in plots (1) and (2). Plot (1) represents the score associated with each profile, and plot (2) represents the fraction of the population associated with this profile. Thus, the average score of the population is $0.3 \times 5 + 0.5 \times 3 + 0.2 \times 4$. Plots (3) and (4) are a continuous version of plots (1) and (2). Here, the average score is $\int_{x \in S} p(x) f(x) dx$.} 
\label{fig:distrib}
\end{center}
\end{figure}

{\bf Contributions.} We first study in this paper the simple case of two populations with a different average score.
The goal here is to determine a new score function $h$ where (a) the two populations have the same average score and (b) the \emph{individual error} is minimized. We define the individual error as the maximal difference between $f$ and $h$, i.e., $\max_{x \in S} |f(x) - h(x)|$ (also written $|| f - h ||_\infty$).
We call the problem of determining the best function $h$ the 2-ODR (2-Optimal Discrimination Removal) problem (``2'' standing for ``two populations'').
We  present an exact solution to the 2-ODR problem.
Roughly speaking, we consider the subsets of $S$ where $p_1(x) > p_2(x)$ and $p_2(x) > p_1(x)$,
and apply a uniform bonus (or penalty) on these subsets.
We show that our solution is indeed optimal for it minimizes the individual error. 

Then we turn to the more general case of $n$ populations, which is arguably the most relevant setting in practice. Indeed, it is for instance often considered important that a score function be both non-racist {\it and} non-sexist. Similarly, it may be relevant to compare the scores of several races, e.g. Black, White, Asian and Arabic. In fact, we may even demand greater granularity by also comparing black female and white female, in addition to already comparing black and white. 
We address this $n$-population setting by considering some desired average score $y_i$ for each population $i$.
This more general goal enables the modelers to describe more subtly what they consider desirable. We call this problem the
Optimal Discrimination Removal (ODR) problem.

This problem is significantly more difficult with $n > 2$. In fact, we conjecture that it is computationally intractable for $n>2$ and combinatorially large profile sets $S$. Indeed, intuitively, in the case $n=2$, the general problem of removing discrimination could be fixed locally for each $x \in S$, by determining whether $x$ is more likely to be of population 1 or 2. Unfortunately, this no longer seems to be the case when $n>2$. To solve the ODR problem, it seems that a global solution $h$ first needs to be derived. But this global solution seems to require at least $\Omega(|S|)$ computation steps in general.

Interestingly though, we show that an approximate solution (with an arbitrarily high level of precision) can be obtained with \emph{linear programming} \cite{zlp0}. Linear programming problems are expressed in terms of a set of inequalities involving linear combinations of variables. These problems have been extensively studied, and a lot of algorithms have been proposed to solve them \cite{zlp1,zlp2,zlp5,zlp6}. Here, we show that this abundant literature of algorithms can also be leveraged to solve discrimination problems.

We proceed incrementally through 6 steps. We first show that the ODR problem is reducible to the simpler (to express)
Optimal Bonus-Malus (OBM) problem, where each desired average score is $0$.
We then define an approximate version of 
OBM, which we denote AOBM.
We consider an arbitrary partition $(S_1,\dots,S_m)$ of $S$,
as well as a set of functions $Z$ which are ``flat'' on each subset $S_j$. The AOBM problem consists in approximating a solution to the OBM problem with a function $u \in Z$. The larger $m$, the more precise the solution.
We show that the AOBM problem is equivalent to a linear programming problem with $2m+1$ variables and $m+2n$ inequalities\footnote{Excluding the inequalities requiring each variable to be positive (which are included in the canonical form of a linear programming problem).}. We use the fact that the functions of $Z$ can only take a finite number of values, to transform the continuous OBM problem into a discrete problem.

We finally also address the inverse problem, where the individual error is not allowed to be greater than $\epsilon$. Here, the goal is to be \emph{as close as possible} to the desired score of each population. We proceed in an analogous way through 6 steps.

The case of two populations is treated in Section~\ref{sec_2pop}, the general case in Section~\ref{sec_gen},
and the inverse case in Section~\ref{sec_altpb}.
We conclude in Section~\ref{sec_conc}.

\section{The Case of Two Populations}
\label{sec_2pop}

Let $S$ be a set of \emph{profiles}.
Let $f$ be a function from $S$ to $\mathbb{R}$ associating a \emph{score} to each profile.
Let $p_1$ and $p_2$ be any two probability density functions on $S$, representing two populations $1$ and $2$.

Let $X$ be the set of functions $g$ from $S$ to $\mathbb{R}$ such that
$\int_{x \in S} p_1(x) g(x) dx = \int_{x \in S} p_2(x) g(x) dx$
(i.e. population $1$ and $2$ have the same average score).

For any function $g$ from $S$ to $\mathbb{R}$, let
$|| g ||_\infty = \max_{x \in S} |g(x)|$.

The 2-ODR (2-Optimal Discrimination Removal) problem consists in finding a function 
$h \in \arg \min_{g \in X} || g - f ||_\infty$, i.e., a function minimizing the individual error.

\paragraph{Solution.}

For $x \in S$, let $u(x) = 1$ if $p_1(x) > p_2(x)$ and $-1$ otherwise.

Let $A = \int_{x \in S} (p_1(x) - p_2(x)) u(x) dx$, 
$B = \int_{x \in S} (p_1(x) - p_2(x)) f(x) dx$
and $k = -B/A$.

We define $h$ by $h(x) = f(x) + k u(x)$.

\bigskip

\begin{theorem}
Function $h$ above solves the 2-ODR problem.
\end{theorem}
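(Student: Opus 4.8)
The plan is to split the argument into a feasibility part and an optimality part: first check that $h \in X$, and then show that no competitor does strictly better, i.e. $\|h-f\|_\infty \le \|g-f\|_\infty$ for every $g \in X$. Since we will exhibit $h \in X$ attaining a certain value and prove that value is a universal lower bound over $X$, this establishes $h \in \arg\min_{g\in X}\|g-f\|_\infty$.

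For feasibility I would simply substitute $h = f + ku$ into the defining constraint of $X$ and use the definitions of $A$, $B$ and $k$. Indeed $\int_{x\in S}(p_1(x)-p_2(x))\,h(x)\,dx = B + kA = B - B = 0$, so the two populations share the same average score under $h$ and thus $h \in X$. This is a routine computation: the constant $k = -B/A$ is designed precisely to cancel $B$.

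The next observation is that the error achieved by $h$ equals $|k|$ exactly: since $|u(x)| = 1$ for all $x$, we have $\|h-f\|_\infty = \|ku\|_\infty = |k|$. Optimality therefore reduces to the lower bound $\|g-f\|_\infty \ge |k|$ for all $g\in X$. To prove it I would set $\delta = g-f$; membership $g\in X$ forces $\int_S(p_1-p_2)\,\delta\,dx = -B$. The integral triangle inequality then gives $|B| = \left|\int_S(p_1-p_2)\,\delta\,dx\right| \le \|\delta\|_\infty \int_S|p_1-p_2|\,dx$. The crucial identity is that $\int_S|p_1-p_2|\,dx = A$, which holds because $u$ is chosen to match the sign of $p_1-p_2$, so that $(p_1-p_2)\,u = |p_1-p_2|$ pointwise. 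Combining the two yields $\|\delta\|_\infty \ge |B|/A = |k|$, which is exactly the bound we need.

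The step requiring the most care is this lower bound, and in particular recognizing why it is tight. The inequality $\left|\int_S(p_1-p_2)\,\delta\,dx\right| \le \|\delta\|_\infty\int_S|p_1-p_2|\,dx$ becomes an equality precisely when $\delta$ has constant magnitude and its sign agrees with that of $p_1-p_2$ — which is exactly the structure of $h-f = ku$. In other words, the sign function $u$ is what saturates the inequality, and this is the conceptual heart of why the uniform bonus-malus construction is optimal and not merely feasible. I would also flag the degenerate case $A=0$ (i.e. $p_1 = p_2$ almost everywhere), in which $k$ is undefined but the problem is trivial since $f$ already lies in $X$; I would simply assume $A \neq 0$ throughout.
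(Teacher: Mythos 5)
Your proof is correct, and it takes a genuinely different route from the paper's. The paper argues by contradiction: it supposes some $h' = f + v \in X$ with $\|v\|_\infty < |k|$, splits $S$ into the regions where $p_1 > p_2$ and $p_1 < p_2$, and shows by a pointwise sign analysis that $\int_S (p_1-p_2)(v - ku)\,dx$ has a definite nonzero sign, contradicting $h' \in X$. You instead prove the lower bound directly via $L^1$--$L^\infty$ duality: from $g \in X$ you get $\int_S (p_1-p_2)\,\delta\,dx = -B$, then $|B| \le \|\delta\|_\infty \int_S |p_1-p_2|\,dx = \|\delta\|_\infty A$ using the pointwise identity $(p_1-p_2)u = |p_1-p_2|$, hence $\|\delta\|_\infty \ge |B|/A = |k|$. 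The two arguments are logically equivalent (the paper's sign-splitting is in effect a hand-rolled proof of the same H\"older-type inequality), but yours is cleaner, identifies the equality case (constant magnitude, sign matching $p_1-p_2$) and thereby explains \emph{why} the bonus-malus function $u$ is the saturating choice, and explicitly handles two points the paper glosses over: the feasibility check $h \in X$ (the paper never actually verifies $B + kA = 0$) and the degenerate case $A = 0$, where $k$ is undefined but $f$ itself is already feasible. The paper's version, for its part, avoids naming any inequality and stays entirely elementary.
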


\begin{proof}
By construction, $|| h - f ||_\infty = |k|$.
If $k = 0$, $h$ indeed minimizes $|| h - f ||_\infty$. We now suppose that $k \neq 0$.

The proof is by contradiction.
Suppose the opposite of the claim: there exists a function $h' \in X$
such that $|| h' - f ||_\infty < |k|$.
Then, $h'(x) = f(x) + v(x)$, with $|v(x)| < |k|$.
Let $D = \int_{x \in S} p_1(x) h'(x) dx - \int_{x \in S} p_2(x) h'(x) dx $.
Then,
$D =  \int_{x \in S} (p_1(x) - p_2(x)) f(x) dx  +  \int_{x \in S} (p_1(x) - p_2(x)) v(x) dx$.

By definition, $k = -B/A$, with
$A = \int_{x \in S} (p_1(x) - p_2(x)) u(x) dx$ and
$B = \int_{x \in S} (p_1(x) - p_2(x)) f(x) dx$.
Thus, $B = -kA$, and
$D = -k \int_{x \in S} (p_1(x) - p_2(x)) u(x)dx  +   \int_{x \in S} (p_1(x) - p_2(x)) v(x) dx
= \int_{x \in S} (p_1(x) - p_2(x)) ( v(x) -k u(x)) dx$.

Let $S_1$ (resp. $S_2$) be the subset of $S$ such that $p_1 > p_2$ (resp. $p_2 < p_1$).
Then, $D = D_1 + D_2$,
where
$D_i = \int_{x \in S_i} (p_1(x) - p_2(x)) ( v(x) -k u(x)) dx$.
Let $s(x) = 1$ if $x > 0$ and $-1$ otherwise.

If $p_1(x) > p_2(x)$ (resp. $p_2(x) < p_1(x)$), $u(x) = 1$ (resp. $u(x) = -1$).
Then, as $|v(x)| < |k|$, we have
$s(v(x) - k u(x)) = -s(k)$
(resp. $s(k)$).
Thus, $s(D_1) = s(D_2) = -s(k)$,
and $D = D_1 + D_2 \neq 0$.

Therefore,
$\int_{s \in S} p_1(x) h'(x) dx \neq \int_{s \in S} p_2(x) h'(x) dx$,
and $h' \notin X$:
contradiction. Hence, our result.
\end{proof}

\section{The General Case}
\label{sec_gen}

We now consider the case of $n$ populations.
This problem when $n \geq 2$ is significantly harder than the problem above due to the entanglement of several probability density functions.
We show that an approximate solution of this problem can be obtained with linear programming. We proceed incrementally through 6 steps.

\begin{enumerate}

\item We define the general Optimal Discrimination Removal (ODR) problem, corresponding to the case $n \geq 2$.

\item We define a simpler (to express) problem, the Optimal Bonus-Malus (OBM) problem.

\item We show that solving OBM provides an immediate solution to ODR.

\item We define an approximate version of the OBM problem (AOBM), where we restrict ourselves to functions which are ``flat'' on an arbitrarily large number of subsets of $S$.

\item We define a Linear Programming problem, that we simply call LP for convenience.

\item We show that LP also solves AOBM.

\end{enumerate}

\subsection*{Step 1: The Optimal Discrimination Removal (ODR) Problem}

Let $(p_1,p_2,\dots,p_n)$ be $n$ probability density functions on $S$, each one representing a population.
Let $(y_1,y_2,\dots,y_n)$ be $n$ arbitrary values.

Let $\Omega_0$ be the set of functions $g$ from $S$ to $\mathbb{R}$ such that, $\forall i \in \{1,\dots,n\}$,
$\int_{x \in S} p_i(x) g(x) dx = y_i$
(i.e. the mean score of population $i$ is $y_i$).

If $\Omega_0 \neq \emptyset$, the ODR problem consists in finding a function 
$h \in \arg \min_{g \in \Omega_0} || g - f ||_\infty$.

\subsection*{Step 2: The Optimal Bonus-Malus (OBM) Problem}

$\forall i \in \{1,\dots,n\}$,
let $b_i = y_i - \int_{x \in S} p_i(x) f(x) dx$.

Let $\Omega$ be the set of functions $g$ from $S$ to $\mathbb{R}$ such that, $\forall i \in \{1,\dots,n\}$,
$\int_{x \in S} p_i(x) g(x) dx = b_i$.

If $\Omega \neq \emptyset$, the OBM problem consists in finding a function 
$u \in \arg \min_{g \in \Omega} || g ||_\infty$.

\subsection*{Step 3: Reducing ODR to OBM}

Theorem~\ref{th_eqivv} below says that a solution to the OBM problem provides an immediate solution to the ODR problem.

\begin{theorem}
\label{th_eqivv}
If $u$ solves the OBM problem, then $h = f + u$ solves the ODR problem.
\end{theorem}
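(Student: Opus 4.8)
The plan is to establish a bijection between the two feasible sets $\Omega_0$ and $\Omega$ that shifts objective values in a way making the two optimization problems equivalent. Concretely, I would consider the translation map $\Phi : g \mapsto g - f$ and show it sends $\Omega_0$ onto $\Omega$ while preserving the relevant objective exactly.

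First I would verify the constraint transformation. For any function $g$ from $S$ to $\mathbb{R}$, linearity of the integral gives $\int_{x \in S} p_i(x)(g(x) - f(x))\,dx = \int_{x \in S} p_i(x) g(x)\,dx - \int_{x \in S} p_i(x) f(x)\,dx$. Hence $g$ satisfies the ODR constraint $\int_{x \in S} p_i(x) g(x)\,dx = y_i$ for all $i$ if and only if $g - f$ satisfies the OBM constraint $\int_{x \in S} p_i(x)(g(x)-f(x))\,dx = y_i - \int_{x \in S} p_i(x) f(x)\,dx = b_i$ for all $i$, by the very definition of $b_i$ in Step 2. This shows $g \in \Omega_0 \iff \Phi(g) \in \Omega$, so $\Phi$ is a bijection from $\Omega_0$ onto $\Omega$ with explicit inverse $w \mapsto w + f$.

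Next I would observe that the objective values match exactly under $\Phi$: for every $g \in \Omega_0$ we have $\|g - f\|_\infty = \|\Phi(g)\|_\infty$. Therefore $\min_{g \in \Omega_0}\|g - f\|_\infty = \min_{w \in \Omega}\|w\|_\infty$, and the minimizers correspond through $\Phi$. In particular, if $u$ solves OBM, i.e. $u \in \arg\min_{w \in \Omega}\|w\|_\infty$, then $h = \Phi^{-1}(u) = f + u$ lies in $\Omega_0$ and achieves $\|h - f\|_\infty = \|u\|_\infty$, which equals the minimum of $\|g - f\|_\infty$ over $\Omega_0$. Hence $h = f + u$ solves ODR.

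I do not anticipate a genuine obstacle: the result is essentially a change of variables by the affine shift $f$. The only point deserving care is checking that this shift maps $\Omega_0$ \emph{exactly} onto $\Omega$, rather than merely into it; this is why I would state the bijectivity explicitly, the surjectivity following from the inverse $w \mapsto w + f$. It is also worth noting in passing that $\Omega \neq \emptyset$ precisely when $\Omega_0 \neq \emptyset$, so that the OBM problem is well-posed exactly when the ODR problem is, keeping the reduction faithful.
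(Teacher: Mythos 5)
Your proposal is correct and follows essentially the same route as the paper: both arguments are a change of variables by the translation $g \mapsto g - f$, using linearity of the integral and the definition of $b_i$ to match the constraint sets, and the fact that the objective $\|g-f\|_\infty$ becomes $\|g-f\|_\infty = \|\Phi(g)\|_\infty$. If anything, you are slightly more careful than the paper, which only states the direction $g \in \Omega_0 \Rightarrow g - f \in \Omega$ explicitly and leaves the feasibility of $h = f+u$ (i.e.\ the inverse direction) implicit.
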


\begin{proof}
As $u$ solves the OBM problem, we have the following:
$\forall g \in \Omega$,
$|| u ||_\infty \leq || g ||_\infty$.

Note that, if $g \in \Omega_0$,
then $g - f \in \Omega$.
Indeed, if $g \in \Omega_0$,
then $\forall i \in \{1,\dots,n\}$,
$\int_{x \in S} p_i(x) g(x) dx = y_i$.
Thus, $\forall i \in \{1,\dots,n\}$,
$\int_{x \in S} p_i(x) ( g(x) - f(x) ) dx = y_i - \int_{x \in S} p_i(x) f(x) dx = b_i$.
Thus, $g - f \in \Omega$.

Therefore, 
$\forall g \in \Omega_0$,
$|| u ||_\infty \leq || g - f ||_\infty$.
As $u = h - f$, we have:
$\forall g \in \Omega_0$,
$|| h - f ||_\infty \leq || g - f ||_\infty$.
Thus, $h \in \arg \min_{g \in \Omega_0} || g - f ||_\infty$. Thus, the result.
\end{proof}

\subsection*{Step 4: The Approximate OBM (AOBM) Problem}

Let $(S_1,\dots,S_m)$ be a partition of $S$:
$S_1 \cup S_2 \cup \dots \cup S_m = S$,
and $\forall \{i,j\} \in \{1,\dots,m\}$,
$S_i \cap S_j = \emptyset$.
Let $Z$ be the set of functions $z$ from $S$ to $\mathbb{R}$ such that,
$\forall i \in \{1,\dots,m\}$,
$\forall x \in S_i$
and $\forall x' \in S_i$,
$z(x) = z(x')$ (i.e. $z$ is ``flat'' on each subset $S_i$).

If $\Omega \cap Z \neq \emptyset$, the AOBM problem consists in finding a function $u \in \arg \min_{g \in \Omega \cap Z} || g ||_\infty$.

\subsection*{Step 5: The Linear Programming (LP) Problem}

Let $N$ and $M$ be two integers.
Let $(x_1,\dots,x_N)$ be $N$ variables.
Let $L$ and $(L_1,\dots,L_M)$ be $M+1$ linear combinations of the variables $(x_1,\dots,x_N)$.
Let $(c_1,\dots,c_M)$ be $M$ constant terms.

A \emph{linear programming} problem consists in finding values of $(x_1,\dots,x_N)$ maximizing L while verifying the following inequalities:

\begin{itemize}
\item $\forall k \in \{1,\dots,N\}$, $x_k \geq 0$

\item $\forall k \in \{1,\dots,M\}$, $L_k \leq c_k$ 
\end{itemize}

In the following, we define a specific linear programming problem, that we simply call LP problem for convenience.

$\forall i \in \{1,\dots,n\}$
and $\forall j \in \{1,\dots,m\}$,
let $v(i,j) = \int_{x \in S_j} p_i(x) dx$.

Let $(\alpha_1,\dots,\alpha_m)$,
$(\beta_1,\dots,\beta_m)$
and $\gamma$ be $2m+1$ variables.

Consider the following inequalities:

\begin{enumerate}

\item $\gamma \geq 0$,
and $\forall j \in \{1,\dots,m\}$,
$\alpha_j \geq 0$ and
$\beta_j \geq 0$.
\item $\forall j \in \{1,\dots,m\}$,
$\alpha_j - \gamma \leq 0$
and $\beta_j - \gamma \leq 0$.
\item $\forall i \in \{1,\dots,n\}$,
$\Sigma_{j=1}^{j=m} \alpha_j v(i,j) - \Sigma_{j=1}^{j=m} \beta_j v(i,j) \leq b_i$
\item $\forall i \in \{1,\dots,n\}$,
$\Sigma_{j=1}^{j=m} \beta_j v(i,j) - \Sigma_{j=1}^{j=m} \alpha_j v(i,j) \leq - b_i$

\end{enumerate}

The LP problem consists in finding values of $(\alpha_1,\dots,\alpha_m)$,
$(\beta_1,\dots,\beta_m)$
and $\gamma$ maximizing $- \gamma$ while satisfying the aforementioned inequalities.

\subsection*{Step 6: Reducing AOBM to LP}

Let $(\alpha_1,\dots,\alpha_m)$,
$(\beta_1,\dots,\beta_m)$
and $\gamma$ be a solution to the LP problem.
$\forall x \in S$, let $\lambda(x)$ be the integer $j$ such that $x \in S_j$.
Let $u$ be the function from $S$ to $\mathbb{R}$ such that,
$\forall x \in S$,
$u(x) = \alpha_{\lambda(x)} - \beta_{\lambda(x)}$.

Theorem~\ref{thlin} below says that $u$ solves the AOBM problem. We first prove some lemmas.

\begin{lemma}
\label{lemmax}
$|| u ||_\infty \geq \max(\alpha^*,\beta^*)$,
where
$\alpha^* = \max_{j \in \{1,\dots,m\}} \alpha_j$
and
$\beta^* = \max_{j \in \{1,\dots,m\}} \beta_j$.
\end{lemma}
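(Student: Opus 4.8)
The plan is to lean on the \emph{optimality} of the LP solution; the inequality is simply false for an arbitrary feasible point (taking $\alpha_j = \beta_j$ large gives $u \equiv 0$ while $\max(\alpha^*,\beta^*)$ is large), so optimality must be used. First I would note that inequality~2 of the LP gives $\alpha_j \leq \gamma$ and $\beta_j \leq \gamma$ for every $j$, hence $\gamma \geq \max(\alpha^*,\beta^*)$ for free. It therefore suffices to establish the stronger bound $|| u ||_\infty \geq \gamma$, since then $|| u ||_\infty \geq \gamma \geq \max(\alpha^*,\beta^*)$.

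To prove $|| u ||_\infty \geq \gamma$, I would construct from the given optimal solution a \emph{competing feasible point} of the LP that induces the same function $u$ but has objective value $-|| u ||_\infty$, and then invoke optimality. Concretely, for each $j$ set $\delta_j = \min(\alpha_j,\beta_j)$ and define $\alpha_j' = \alpha_j - \delta_j$ and $\beta_j' = \beta_j - \delta_j$. By construction $\alpha_j',\beta_j' \geq 0$, at least one of them is $0$, and $\alpha_j' - \beta_j' = \alpha_j - \beta_j$; hence the associated function is still $u$ and $\max(\alpha_j',\beta_j') = |\alpha_j - \beta_j|$. Putting $\gamma' = \max_{j} \max(\alpha_j',\beta_j') = \max_j |\alpha_j - \beta_j| = || u ||_\infty$ then yields a candidate $(\alpha',\beta',\gamma')$.

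Next I would check that this candidate is feasible. Inequality~1 holds since all entries are nonnegative; inequality~2 holds by the very choice of $\gamma'$ as the maximum of the $\max(\alpha_j',\beta_j')$; and inequalities~3 and~4 hold because each of their terms is $(\alpha_j - \beta_j)\,v(i,j)$, which depends only on the differences $\alpha_j - \beta_j$ and is therefore unchanged under the shift. The candidate thus achieves objective $-\gamma' = -|| u ||_\infty$. Since $(\alpha,\beta,\gamma)$ maximizes $-\gamma$, we get $-\gamma \geq -\gamma'$, that is $\gamma \leq || u ||_\infty$, which combined with $\gamma \geq \max(\alpha^*,\beta^*)$ closes the argument.

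The main obstacle is conceptual rather than computational: recognizing that feasibility alone is insufficient and that optimality is essential, and then spotting the ``common-part'' shift that subtracts $\min(\alpha_j,\beta_j)$ from each pair. The point that makes the shift work is that inequalities~3 and~4 see only the differences $\alpha_j - \beta_j$, so one can drive down $\gamma$ to exactly $|| u ||_\infty$ without disturbing the OBM-type constraints, producing the competitor needed to contradict a strict inequality $|| u ||_\infty < \gamma$.
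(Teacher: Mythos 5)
Your proof is correct and uses exactly the paper's construction: subtracting $\min(\alpha_j,\beta_j)$ from each pair is the same ``common-part'' shift the paper performs via a case split on $\alpha_j \geq \beta_j$, followed by the same appeal to LP optimality against the competing point with $\gamma' = \|u\|_\infty$. The only difference is organizational: you argue directly and establish the stronger fact $\|u\|_\infty \geq \gamma$ (which the paper only extracts later, in Lemma~\ref{lemgamma}), whereas the paper phrases the same argument as a proof by contradiction.
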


\begin{proof}
Suppose the opposite:
$|| u ||_\infty < \max(\alpha^*,\beta^*)$.
According to inequalities 2, $\gamma \geq \max(\alpha^*,\beta^*)$. Thus, $\gamma > || u ||_\infty$.

$\forall j \in \{1,\dots,m\}$,
we define $(\alpha'_1,\dots,\alpha'_m)$ and
$(\beta'_1,\dots,\beta'_m)$
as follows:

\begin{itemize}

\item If $\alpha_j \geq \beta_j$,
$\alpha'_j = \alpha_j - \beta_j$
and $\beta'_j = 0$.

\item Otherwise, $\alpha'_j = 0$ and
$\beta'_j = \beta_j - \alpha_j$.

\end{itemize}

Let $\gamma' = || u ||_\infty < \gamma$.

$\forall j \in \{1,\dots,m\}$,
$\alpha'_j \leq \max(\alpha_j,\beta_j)$ and $\beta'_j \leq \max(\alpha_j,\beta_j)$.
Thus, $\alpha'_j \leq \gamma'$
and $\beta'_j \leq \gamma'$.

We now show that
$(\alpha'_1,\dots,\alpha'_m)$,
$(\beta'_1,\dots,\beta'_m)$
and $\gamma'$ satisfy the inequalities of the LP problem.

Inequalities 1 are satisfied by definition.
$\forall j \in \{1,\dots,m\}$,
$\alpha'_j \leq \gamma'$
and $\beta'_j \leq \gamma'$.
Thus, $\alpha'_j - \gamma' \leq 0$
and $\beta'_j - \gamma' \leq 0$,
and inequalities 2 are satisfied.

Inequalities 3 and 4 are equivalent to:
$\forall i \in \{1,\dots,n\}$,
$\Sigma_{j=1}^{j=m} \alpha_j v(i,j) - \Sigma_{j=1}^{j=m} \beta_j v(i,j) = b_i$.
$\forall j \in \{1,\dots,m\}$:

\begin{itemize}
\item If $\alpha_j \geq \beta_j$,
$\alpha'_j - \beta'_j = (\alpha_j - \beta_j) - 0 = \alpha_j - \beta_j$.
\item Otherwise, 
$\alpha'_j - \beta'_j = 0 - (\beta_j - \alpha_j) = \alpha_j - \beta_j$.
\end{itemize}

Thus, $\forall j \in \{1,\dots,m\}$,
$\alpha'_j - \beta'_j = \alpha_j - \beta_j$.
Thus,
$\Sigma_{j=1}^{j=m} \alpha'_j v(i,j) - \Sigma_{j=1}^{j=m} \beta'_j v(i,j)
= \Sigma_{j=1}^{j=m} (\alpha'_j - \beta'_j) v(i,j)
= \Sigma_{j=1}^{j=m} (\alpha_j - \beta_j) v(i,j)
= \Sigma_{j=1}^{j=m} \alpha_j v(i,j) - \Sigma_{j=1}^{j=m} \beta_j v(i,j) = b_i$.
Thus, inequalities 3 and 4 are satisfied.

Thus, there exists
$(\alpha'_1,\dots,\alpha'_m)$,
$(\beta'_1,\dots,\beta'_m)$
and $\gamma'$
satisfying the inequalities of the LP problem
with $- \gamma' > - \gamma$.
Thus, $(\alpha_1,\dots,\alpha_m)$,
$(\beta_1,\dots,\beta_m)$
and $\gamma$ do not solve the LP problem: contradiction. Thus, the result.
\end{proof}

\begin{lemma}
\label{lemgamma}
$|| u ||_\infty = \gamma$.
\end{lemma}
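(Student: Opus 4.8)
The plan is to prove the two inequalities $|| u ||_\infty \leq \gamma$ and $|| u ||_\infty \geq \gamma$ separately. Since $u$ takes the constant value $\alpha_j - \beta_j$ on each block $S_j$ of the partition, both bounds can be read off from the finitely many numbers $\alpha_j$, $\beta_j$ and $\gamma$, so the argument is essentially about scalars.

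For the upper bound I would argue pointwise. Fix any $x \in S$ and let $j = \lambda(x)$. By inequalities 1 we have $\alpha_j \geq 0$ and $\beta_j \geq 0$, so $|u(x)| = |\alpha_j - \beta_j| \leq \max(\alpha_j, \beta_j)$; and by inequalities 2 we have $\alpha_j \leq \gamma$ and $\beta_j \leq \gamma$, hence $\max(\alpha_j, \beta_j) \leq \gamma$. Combining, $|u(x)| \leq \gamma$ for every $x$, which gives $|| u ||_\infty \leq \gamma$ directly.

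For the lower bound I would lean on Lemma~\ref{lemmax}, which already yields $|| u ||_\infty \geq \max(\alpha^*, \beta^*)$. It therefore suffices to show that the optimal value satisfies $\gamma = \max(\alpha^*, \beta^*)$. One direction, $\max(\alpha^*, \beta^*) \leq \gamma$, is immediate from inequalities 2. For the reverse I would use the optimality of the LP solution through an exchange argument: if we had $\gamma > \max(\alpha^*, \beta^*)$, then replacing $\gamma$ by $\gamma' = \max(\alpha^*, \beta^*)$ while keeping all $\alpha_j$ and $\beta_j$ unchanged would still satisfy every LP inequality --- inequalities 1 because $\gamma' \geq 0$, inequalities 2 because $\alpha_j \leq \alpha^* \leq \gamma'$ and $\beta_j \leq \beta^* \leq \gamma'$, and inequalities 3 and 4 because they do not involve $\gamma$ at all --- while achieving the strictly larger objective $-\gamma' > -\gamma$, contradicting optimality. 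Hence $\gamma = \max(\alpha^*, \beta^*) \leq || u ||_\infty$.

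Putting the two bounds together gives $|| u ||_\infty = \gamma$. I expect the main (though modest) obstacle to be the lower bound: the point is to notice that the upper-bound argument alone cannot reach $\gamma$, since it only controls $|\alpha_j - \beta_j|$ rather than $\max(\alpha_j, \beta_j)$, and that closing the gap requires invoking optimality to force $\gamma$ down to $\max(\alpha^*, \beta^*)$ --- essentially the same exchange idea already used in the proof of Lemma~\ref{lemmax}.
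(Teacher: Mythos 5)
Your proof is correct and follows essentially the same route as the paper: the pointwise bound $|\alpha_j-\beta_j|\leq\gamma$ for the upper direction, and Lemma~\ref{lemmax} combined with the observation that optimality forces $\gamma=\max(\alpha^*,\beta^*)$ for the lower direction. If anything, your explicit exchange argument (replacing $\gamma$ by $\max(\alpha^*,\beta^*)$ and checking feasibility) is a slightly more rigorous rendering of the step the paper states somewhat informally.
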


\begin{proof}
$|| u ||_\infty = \max_{x \in S} |u(x)| = \max_{j \in \{1,\dots,m\}} |\alpha_j - \beta_j|$.
$\forall j \in \{1,\dots,m\}$,
$\alpha_j \leq \gamma$
and
$\beta_j \leq \gamma$.
Thus, $|\alpha_j - \beta_j| \leq \gamma$, and $|| u ||_\infty \leq \gamma$.

We now show that $\gamma \leq || u ||_\infty$.
Suppose the opposite: $\gamma > || u ||_\infty$. As the LP problem consists in maximizing $-\gamma$ (and thus, minimizing $\gamma$),
this implies that the inequalities of the LP problem are not compatible with
$\gamma \leq || u ||_\infty$.
Variable $\gamma$ only appears in inequalities 1 and 2, and these inequalities impose to have
$\gamma \geq 0$,
$\gamma \geq \max_{j \in \{1,\dots,m\}} \alpha_j$
and
$\gamma \geq \max_{j \in \{1,\dots,m\}} \beta_j$.
Thus,
$\gamma = \max(a^*,b^*)$,
where
$\alpha^* = \max_{j \in \{1,\dots,m\}} \alpha_j$
and
$\beta^* = \max_{j \in \{1,\dots,m\}} \beta_j$.
Thus, according to Lemma~\ref{lemmax},
$|| u ||_\infty \geq \gamma$.

Therefore, $|| u ||_\infty = \gamma$.
\end{proof}

\begin{theorem}
\label{thlin}
Function $u$ solves the AOBM problem.
\end{theorem}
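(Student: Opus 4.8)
The plan is to establish two things: that $u$ is feasible, i.e.\ $u \in \Omega \cap Z$, and that $u$ is optimal among all feasible functions. Feasibility in $Z$ is immediate, since by construction $u$ takes the constant value $\alpha_j - \beta_j$ on each $S_j$. For membership in $\Omega$, I would first observe that inequalities 3 and 4 of the LP problem, taken together, force the equality $\Sigma_{j=1}^{j=m} (\alpha_j - \beta_j) v(i,j) = b_i$ for every $i \in \{1,\dots,n\}$ (inequality 4 is just the reverse inequality to 3, rewritten). Then, decomposing the integral $\int_{x \in S} p_i(x) u(x) dx$ over the partition $(S_1,\dots,S_m)$ and using $v(i,j) = \int_{x \in S_j} p_i(x) dx$ together with the flatness of $u$, this integral equals $\Sigma_{j=1}^{j=m} (\alpha_j - \beta_j) v(i,j) = b_i$, so $u \in \Omega$.

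For optimality, I would invoke Lemma~\ref{lemgamma}, which identifies $|| u ||_\infty$ with the optimal LP objective value $\gamma$. It therefore suffices to show that no competitor $g \in \Omega \cap Z$ satisfies $|| g ||_\infty < \gamma$. The key idea is to convert an arbitrary competitor $g$ into a feasible point of the LP problem. Since $g \in Z$, it takes a constant value $w_j$ on each $S_j$; I would set $\alpha''_j = \max(w_j, 0)$ and $\beta''_j = \max(-w_j, 0)$, so that $\alpha''_j - \beta''_j = w_j$ and $\max(\alpha''_j, \beta''_j) = |w_j|$, and put $\gamma'' = || g ||_\infty = \max_{j} |w_j|$. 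This mirrors exactly the sign-splitting used in the proof of Lemma~\ref{lemmax}.

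I would then verify that this triple satisfies all four families of LP inequalities. Inequalities 1 and 2 hold because $\alpha''_j, \beta''_j \geq 0$ and both are bounded by $|w_j| \leq \gamma''$. Inequalities 3 and 4 reduce to $\Sigma_{j=1}^{j=m} w_j v(i,j) = b_i$, which is precisely the statement that $g \in \Omega$ once the defining integral $\int_{x \in S} p_i(x) g(x) dx$ is decomposed over the partition. Hence $(\alpha''_j, \beta''_j, \gamma'')$ is LP-feasible with objective $- \gamma''$. Because $(\alpha_j, \beta_j, \gamma)$ solves the LP problem (i.e.\ maximizes $-\gamma$), we get $-\gamma \geq -\gamma''$, that is $\gamma \leq \gamma'' = || g ||_\infty$. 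Combining this with Lemma~\ref{lemgamma} gives $|| u ||_\infty = \gamma \leq || g ||_\infty$ for every $g \in \Omega \cap Z$, and together with $u \in \Omega \cap Z$ this shows $u \in \arg \min_{g \in \Omega \cap Z} || g ||_\infty$, as required.

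The main obstacle is the optimality direction: one must ensure that every feasible competitor $g$ produces a genuinely LP-feasible point carrying the \emph{same} sup-norm, so that the LP optimum yields a valid lower bound on $|| g ||_\infty$. The delicate point is checking that the equality constraints encoded by inequalities 3 and 4 survive the sign-split of $g$, which hinges on the identity $\alpha''_j - \beta''_j = w_j$; the bound constraints (inequalities 2) are then routine. The feasibility direction is comparatively straightforward.
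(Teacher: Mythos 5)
Your proposal is correct and follows essentially the same route as the paper: feasibility via the equality forced by inequalities 3 and 4 plus the partition decomposition, and optimality via the same sign-splitting of a competitor into nonnegative parts $(\alpha''_j,\beta''_j)$ with $\gamma''=\|g\|_\infty$, combined with Lemma~\ref{lemgamma}. The only difference is that you argue the optimality direction directly (LP optimality gives $\gamma \leq \gamma''$) whereas the paper phrases it as a contradiction, which is an immaterial distinction.
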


\begin{proof}
By definition, $u \in Z$.

Inequalities 3 and 4 of the LP problem are equivalent to:
$\forall i \in \{1,\dots,n\}$,
$\Sigma_{j=1}^{j=m} \alpha_j v(i,j) - \Sigma_{j=1}^{j=m} \beta_j v(i,j) = b_i$.
Thus, $\forall i \in \{1,\dots,n\}$,
$b_i = \Sigma_{j=1}^{j=m} (\alpha_j - \beta_j) v(i,j) = \Sigma_{j=1}^{j=m} (\alpha_j - \beta_j) \int_{x \in S_j} p_i(x) dx =
\Sigma_{j=1}^{j=m} \int_{x \in S_j} u(x) p_i(x) dx = \int_{x \in S} p_i(x) u(x) dx$.
Thus, $u \in \Omega$.

Therefore, $u \in \Omega \cap Z$.
Now, suppose the opposite of the claim:
$u \notin \arg \min_{g \in \Omega \cap Z} || g ||_\infty$.
Let $w \in \arg \min_{g \in \Omega \cap Z} || g ||_\infty$.
Thus, $|| w ||_\infty < || u ||_\infty$.

Let $(w_1,\dots,w_m)$ be such that,
$\forall x \in S_j$,
$w(x) = w_j$.
Let $\gamma' = || w ||_\infty$.
$\forall j \in \{1,\dots,m\}$,
we define
$(\alpha_1',\dots,\alpha_m')$
and
$(\beta_1',\dots,\beta_m')$
as follows:

\begin{itemize}

\item If $w_j \geq 0$, $\alpha'_j = w_j$
and $\beta'_j = 0$.

\item Otherwise, $\alpha'_j = 0$
and $\beta'_j = -w_j$.
\end{itemize}

Thus, inequalities 1 are satisfied.

As $\gamma' = || w ||_\infty$,
$\forall j \in \{1,\dots,m\}$,
$\gamma' \geq |w_j| \geq \max(\alpha'_j,\beta'_j)$.
Thus, inequalities 2 are satisfied.

As $w \in \Omega$,
$\forall i \in \{1,\dots,n\}$,
$\int_{x \in S_j} p_i(x) w(x) dx = b_i$.
Thus,
$\forall i \in \{1,\dots,n\}$,
$b_i = \Sigma_{j=1}^{j=m} \int_{x \in S_j} p_i(x) w(x) dx$ $=
\Sigma_{j=1}^{j=m} w_j \int_{x \in S_j} p_i(x) dx =
\Sigma_{j=1}^{j=m} w_j v(i,j) =
\Sigma_{j=1}^{j=m} \alpha'_j v(i,j) - \Sigma_{j=1}^{j=m} \beta'_j v(i,j) \leq b_i$.
Thus, inequalities 3 and 4 are satisfied.

According to Lemma~\ref{lemgamma},
$|| u ||_\infty = \gamma$.
Thus, as $|| w ||_\infty < || u ||_\infty$,
$\gamma' < \gamma$.
Therefore, there exists
$(\alpha'_1,\dots,\alpha'_m)$,
$(\beta'_1,\dots,\beta'_m)$
and $\gamma'$
satisfying the inequalities of the LP problem
with $- \gamma' > - \gamma$.
Thus, $(\alpha_1,\dots,\alpha_m)$,
$(\beta_1,\dots,\beta_m)$
and $\gamma$ do not solve the LP problem: contradiction. Thus, the result.
\end{proof}

\section{The Inverse Case}
\label{sec_altpb}

In the previous section, we showed how to reach the desired scores for each population with a minimal individual error. However, even when minimized, the individual error may still be very high, and sometimes not acceptable.

In this section, we consider the inverse problem: assuming that we can accept an individual error which is at most $\epsilon$, how can we reach a score which is \emph{as close as possible} from the desired scores of each population? We call this problem the inverse ODR (IODR) problem.

We again proceed in 6 steps, following the same outline as the 6 steps of Section~\ref{sec_gen}.

\subsection*{Step 1: The Inverse ODR (IODR) Problem}

Let $\epsilon \geq 0$.
Let $\Phi_0$ be the set of functions $g$ from $S$ to $\mathbb{R}$ such that $||g - f||_\infty \leq \epsilon$ (i.e., functions for which the individual error remains acceptable).

Let $g$ be a function from $S$ to $\mathbb{R}$.
$\forall i \in \{1,\dots,n\}$, let
$\mu_i(g) = |\int_{x \in S} p_i(x) g(x) dx - y_i |$ (i.e., the distance between the average score of population $i$ and its desired average score $y_i$).
Let $\mu(g) = \max_{i \in \{1,\dots,n\}} \mu_i(g)$ (i.e., the upper bound of these distances).

The IODR problem consists in finding a function
$h \in \arg\min_{g \in \Phi_0} \mu(g)$.

\subsection*{Step 2: The Inverse OBM (IOBM) Problem}

Let $\Phi$ be the set of functions $g$ from $S$ to $\mathbb{R}$
such that $|| g ||_\infty  \leq \epsilon$.

$\forall i \in \{1,\dots,n\}$, let
$\Delta_i(g) = | \int_{x \in S} p_i(x) g(x) dx - b_i |$.
Let $\Delta(g) = \max_{i \in \{1,\dots,n\}} \Delta_i (g)$.

The IOBM problem consists in finding a function $u \in \arg\min_{g \in \Phi} \Delta(g)$.

\subsection*{Step 3: Reducing IODR to IOBM}

In Theorem~\ref{iobm_eq}, we show that a solution to the IOBM problem provides an immediate solution to the IODR problem.

\begin{theorem}
\label{iobm_eq}
If $u$ solves the IOBM problem, then $u + f$ solves the IODR problem.
\end{theorem}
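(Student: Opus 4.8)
The plan is to mirror exactly the reduction used in Theorem~\ref{th_eqivv}, exploiting the fact that the map $g \mapsto g - f$ is the natural bijection between the two feasible sets and that it preserves the relevant objective. First I would set up the correspondence between the constraint sets. The set $\Phi_0$ consists of functions $g$ with $||g - f||_\infty \leq \epsilon$, while $\Phi$ consists of functions with $||g||_\infty \leq \epsilon$. I would show that $g \in \Phi_0$ if and only if $g - f \in \Phi$, which is immediate since $||(g-f)||_\infty = ||g - f||_\infty$. Thus the substitution $g' = g - f$ is a bijection from $\Phi_0$ onto $\Phi$.

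Next I would verify that this bijection transports the objective $\mu$ to the objective $\Delta$. For any $g$, observe that
\[
\int_{x \in S} p_i(x)(g(x) - f(x))\,dx - b_i = \int_{x \in S} p_i(x) g(x)\,dx - \int_{x \in S} p_i(x) f(x)\,dx - b_i.
\]
Recalling from Step~2 of Section~\ref{sec_gen} that $b_i = y_i - \int_{x \in S} p_i(x) f(x)\,dx$, the right-hand side simplifies to $\int_{x \in S} p_i(x) g(x)\,dx - y_i$. Taking absolute values gives $\Delta_i(g - f) = \mu_i(g)$ for every $i$, and hence $\Delta(g - f) = \mu(g)$ after taking the maximum over $i$.

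With both the feasibility equivalence and the objective identity in hand, the conclusion follows by the standard transport-of-minimizers argument. Since $u$ solves IOBM, we have $\Delta(u) \leq \Delta(g')$ for all $g' \in \Phi$. Writing an arbitrary $g \in \Phi_0$ as $g = (g - f) + f$ with $g - f \in \Phi$, the objective identity yields $\mu(g) = \Delta(g - f) \geq \Delta(u)$. Setting $h = u + f$, we have $h - f = u$, so $\mu(h) = \Delta(u) \leq \mu(g)$ for every $g \in \Phi_0$, and $h \in \Phi_0$ because $||h - f||_\infty = ||u||_\infty \leq \epsilon$. Therefore $h \in \arg\min_{g \in \Phi_0} \mu(g)$.

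I do not expect any genuine obstacle here: the proof is a routine translation argument, structurally identical to Theorem~\ref{th_eqivv}, with $\mu$ and $\Delta$ playing the roles formerly played by the two $\infty$-norm expressions. The only point requiring minor care is confirming that the additive shift by the fixed constant $\int p_i f$ inside each $\mu_i$ cancels precisely against the definition of $b_i$, which is exactly the algebraic identity isolated in the second paragraph.
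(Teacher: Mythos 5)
Your proof is correct and follows essentially the same route as the paper's: the key step in both is the algebraic identity $\Delta_i(g-f)=\mu_i(g)$ (equivalently $\Delta_i(g)=\mu_i(g+f)$ in the paper's direction), obtained by cancelling $\int_{x\in S}p_i(x)f(x)\,dx$ against the definition of $b_i$, combined with the translation bijection between $\Phi$ and $\Phi_0$. If anything, your write-up is slightly more careful than the paper's, since you explicitly verify the feasibility correspondence $g\in\Phi_0\Leftrightarrow g-f\in\Phi$, which the paper leaves implicit.
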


\begin{proof}
Let $g$ be a function from $S$ to $\mathbb{R}$.
$\forall i \in \{1,\dots,n\}$,
$\Delta(u) = |\int_{x \in S} p_i(x) g(x) dx - b_i|
= |\int_{x \in S} p_i(x) g(x) dx + \int_{x \in S} p_i(x) f(x) dx - \int_{x \in S} p_i(x) f(x) dx - b_i| 
= |\int_{x \in S} p_i(x) (g(x) + f(x)) dx - y_i|
= \mu_i(g+f)$.
Thus, $\Delta(g) = \mu(g+f)$,
and $\arg\min_{g \in \Phi} \Delta(g) = \arg\min_{g \in \Phi_0} \mu(g+f)$.

Therefore, if $u \in \arg\min_{g \in \Phi} \Delta(g)$, then 
$u + f \in \arg\min_{g \in \Phi_0} \mu(g)$. Thus, the result
\end{proof}

\subsection*{Step 4: The Inverse AOBM (IAOBM) Problem}

The IAOBM problem consists in finding a function $u \in \arg\min_{g \in \Phi \cap Z} \Delta(g)$.

\subsection*{Step 5: The Inverse LP (ILP) Problem}

Let $(\alpha_1,\dots,\alpha_m)$,
$(\beta_1,\dots,\beta_m)$
and $\gamma$ be $2m+1$ variables.

Consider the following inequalities:

\begin{enumerate}

\item $\gamma \geq 0$,
and $\forall j \in \{1,\dots,m\}$,
$\alpha_j \geq 0$ and
$\beta_j \geq 0$.

\item $\forall j \in \{1,\dots,m\}$,
$\alpha_j \leq \epsilon$ and $\beta_j \leq \epsilon$.

\item $\forall i \in \{1,\dots,n\}$,
$\Sigma_{j=1}^{j=m} \alpha_j  v(i,j) - \Sigma_{j=1}^{j=m} \beta_j  v(i,j) - b_i \leq \gamma$

\item $\forall i \in \{1,\dots,n\}$,
$\Sigma_{j=1}^{j=m} \beta_j  v(i,j) - \Sigma_{j=1}^{j=m} \alpha_j  v(i,j) + b_i \leq \gamma$

\end{enumerate}

The ILP problem consists in finding values of $(\alpha_1,\dots,\alpha_m)$,
$(\beta_1,\dots,\beta_m)$
and $\gamma$ maximizing $- \gamma$ while satisfying the aforementioned inequalities.

\subsection*{Step 6: Reducing IAOBM to ILP}

Let $(\alpha_1,\dots,\alpha_m)$,
$(\beta_1,\dots,\beta_m)$
and $\gamma$ be a solution to the ILP problem.
Let $u$ be the function from $S$ to $\mathbb{R}$ such that,
$\forall x \in S$,
$u(x) = \alpha_{\lambda(x)} - \beta_{\lambda(x)}$.

In Theorem~\ref{ilp_eq}, we show that $u$ solves the IAOBM problem.

\begin{lemma}
\label{lem_delta_1}
$\Delta(u) \leq \gamma$.
\end{lemma}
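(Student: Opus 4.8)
The plan is to bound $\Delta(u)$ by directly expanding the definition of $\Delta_i(u)$ and relating it to the quantities that appear in inequalities 3 and 4 of the ILP problem. First I would observe, exactly as in the proof of Theorem~\ref{thlin}, that for each population $i$ the integral $\int_{x \in S} p_i(x) u(x) dx$ equals $\Sigma_{j=1}^{j=m} (\alpha_j - \beta_j) v(i,j)$. This identity follows because $u$ is flat on each $S_j$ with value $\alpha_j - \beta_j$, so $\int_{x \in S_j} p_i(x) u(x) dx = (\alpha_j - \beta_j) v(i,j)$, and summing over $j$ gives the claim.

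With this in hand, I would write $\Delta_i(u) = |\int_{x \in S} p_i(x) u(x) dx - b_i| = |\Sigma_{j=1}^{j=m} \alpha_j v(i,j) - \Sigma_{j=1}^{j=m} \beta_j v(i,j) - b_i|$. The key step is to unpack the absolute value: this quantity is $\leq \gamma$ precisely when both the expression and its negation are $\leq \gamma$. But inequality 3 states that $\Sigma_{j=1}^{j=m} \alpha_j v(i,j) - \Sigma_{j=1}^{j=m} \beta_j v(i,j) - b_i \leq \gamma$, and inequality 4 states that $\Sigma_{j=1}^{j=m} \beta_j v(i,j) - \Sigma_{j=1}^{j=m} \alpha_j v(i,j) + b_i \leq \gamma$, which is exactly the negation of the same expression bounded by $\gamma$. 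Hence $\Delta_i(u) \leq \gamma$ for every $i$.

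Finally, since $\Delta(u) = \max_{i \in \{1,\dots,n\}} \Delta_i(u)$ and each $\Delta_i(u) \leq \gamma$, the maximum is also bounded by $\gamma$, giving $\Delta(u) \leq \gamma$ as desired. I do not anticipate a genuine obstacle here: the entire content is that inequalities 3 and 4 together encode the two-sided bound $|\cdot| \leq \gamma$ on the discrepancy for each population, so the lemma is essentially a restatement of those two inequalities once the integral has been rewritten as the finite sum. The only point requiring minor care is confirming that inequality 4 is indeed the correct complementary bound (with the signs of $\alpha$, $\beta$, and $b_i$ all flipped relative to inequality 3), so that the two together yield the absolute-value bound rather than a one-sided one.
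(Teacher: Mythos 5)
Your proposal is correct and follows the paper's own argument exactly: rewrite $\int_{x\in S} p_i(x)u(x)\,dx$ as $\Sigma_{j=1}^{j=m}(\alpha_j-\beta_j)v(i,j)$ using the flatness of $u$, then observe that inequalities 3 and 4 give the two-sided bound on $\Delta_i(u)$, hence on the maximum. Your explicit remark that the two inequalities jointly encode the absolute-value bound is a slightly more careful spelling-out of the paper's terse ``according to inequalities 3 and 4,'' but the substance is identical.
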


\begin{proof}
$\forall i \in \{1,\dots,n\}$,
$\Delta_i(u) = |\int_{x \in S} p_i(x) u(x) dx - b_i|
= |\Sigma_{j=1}^{j=m} \int_{x \in S_j} p_i(x) u(x) dx - b_i|
= |\Sigma_{j=1}^{j=m} (\alpha_j - \beta_j) v(i,j) dx - b_i|
= |\Sigma_{j=1}^{j=m} \alpha_j v(i,j) - \Sigma_{j=1}^{j=m} \beta_j v(i,j) - bi| \leq \gamma$,
according to inequalities 3 and 4.
Thus, $\Delta(u) = \max_{i \in \{1,\dots,n\}} \Delta_i(u) \leq \gamma$.\end{proof}

\begin{lemma}
\label{lem_delta_2}
$\Delta(u) \geq \gamma$.
\end{lemma}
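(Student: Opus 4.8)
The plan is to prove the reverse inequality $\Delta(u) \geq \gamma$ by contradiction, mirroring the structure of Lemma~\ref{lemgamma} from the forward case. Suppose instead that $\Delta(u) < \gamma$. Since the ILP problem maximizes $-\gamma$, i.e.\ minimizes $\gamma$, the value $\gamma$ is the smallest number for which the inequalities admit a feasible solution with the given $(\alpha_j)$ and $(\beta_j)$. The key observation is that $\gamma$ appears only in inequalities 3 and 4, where it serves as an upper bound on the quantities $\Sigma_j \alpha_j v(i,j) - \Sigma_j \beta_j v(i,j) - b_i$ and its negation $\Sigma_j \beta_j v(i,j) - \Sigma_j \alpha_j v(i,j) + b_i$. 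Taking the maximum over $i$ of these two families of expressions yields exactly $\max_i |\Sigma_j (\alpha_j - \beta_j) v(i,j) - b_i| = \Delta(u)$, so the binding constraint forces $\gamma \geq \Delta(u)$ at any feasible point.

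First I would make this precise: for the solution $(\alpha_1,\dots,\alpha_m)$, $(\beta_1,\dots,\beta_m)$, $\gamma$ of the ILP problem, inequalities 3 and 4 together assert that $\gamma$ is an upper bound for both $\Delta_i(u)$-type expressions for every $i$, hence $\gamma \geq \Delta(u)$. Indeed, for each $i$, inequality 3 gives $\Sigma_j \alpha_j v(i,j) - \Sigma_j \beta_j v(i,j) - b_i \leq \gamma$ and inequality 4 gives the negative of the same signed quantity $\leq \gamma$, so $|\Sigma_j (\alpha_j - \beta_j) v(i,j) - b_i| \leq \gamma$, and taking the max over $i$ gives $\Delta(u) \leq \gamma$, which is just Lemma~\ref{lem_delta_1}. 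The content of this lemma is the opposite direction.

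To get $\gamma \leq \Delta(u)$, I would argue that if $\gamma > \Delta(u)$, then the candidate solution using the same $(\alpha_j)$, $(\beta_j)$ but with the strictly smaller value $\gamma' = \Delta(u)$ still satisfies all four families of inequalities: inequalities 1 and 2 do not involve $\gamma$ at all and so remain satisfied, while inequalities 3 and 4 hold with $\gamma'$ replacing $\gamma$ precisely because $\gamma' = \Delta(u) = \max_i |\Sigma_j (\alpha_j - \beta_j) v(i,j) - b_i|$ dominates each individual signed expression appearing on the left-hand sides. This produces a feasible point with $-\gamma' > -\gamma$, contradicting the optimality of the original solution for the objective $-\gamma$. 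Hence $\gamma \leq \Delta(u)$, which combined with Lemma~\ref{lem_delta_1} yields the claim.

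The main obstacle, though it is a mild one, is bookkeeping the sign conventions so that the two inequalities 3 and 4 indeed combine into the absolute value $\Delta_i(u)$ rather than just one side of it; one must check that inequality 2 (the bounds $\alpha_j, \beta_j \leq \epsilon$) is genuinely independent of $\gamma$ so that lowering $\gamma$ to $\gamma'$ never violates the feasibility region governing $u$ itself. Once this independence is confirmed, the contradiction is immediate. Note that, as in the forward case, the actual optimality of $u$ for the IAOBM problem is deferred to Theorem~\ref{ilp_eq}; the present lemma only pins down that the ILP objective value $\gamma$ coincides with $\Delta(u)$.
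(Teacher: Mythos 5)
Your argument is correct and is essentially identical to the paper's proof: assume $\Delta(u) < \gamma$, keep the same $(\alpha_j),(\beta_j)$, set $\gamma' = \Delta(u)$, observe that inequalities 3 and 4 still hold since $\gamma'$ dominates each $|\Sigma_j(\alpha_j-\beta_j)v(i,j) - b_i|$, and contradict the optimality of $-\gamma$. One trivial slip: inequality 1 does involve $\gamma$ (it requires $\gamma \geq 0$), but this is harmless since $\gamma' = \Delta(u)$ is a maximum of absolute values and hence nonnegative.
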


\begin{proof}
Suppose the opposite:
$\Delta(u) < \gamma$.
Let $\gamma' = \Delta(u)$.
$\forall i \in \{1,\dots,n\}$,
$\Delta_i(u) = | \int_{x \in S} p_i(x) u(x) dx - b_i |
= |\Sigma_{j=1}^{j=m} \alpha_j  v(i,j) - \Sigma_{j=1}^{j=m} \beta_j  v(i,j) - b_i| \leq \Delta(u)$.
Thus, as $\gamma' = \Delta(u)$, inequalities 3 and 4 are still satisfied if we replace $\gamma$ by $\gamma'$.
Thus, as $\gamma' < \gamma$,
$(\alpha_1,\dots,\alpha_m)$,
$(\beta_1,\dots,\beta_m)$
and $\gamma$ do not solve the ILP problem: contradiction. Thus, the result.
\end{proof}

\begin{theorem}
\label{ilp_eq}
The function $u$ solves the IAOBM problem.
\end{theorem}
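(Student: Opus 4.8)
The plan is to establish the two defining requirements of a solution to the IAOBM problem separately: first that $u$ is feasible, i.e.\ $u \in \Phi \cap Z$, and then that $u$ is optimal among all feasible functions. Feasibility in $Z$ is immediate from the definition of $u$, since $u(x) = \alpha_{\lambda(x)} - \beta_{\lambda(x)}$ is constant on each $S_j$. Feasibility in $\Phi$ requires $||u||_\infty \leq \epsilon$: here I would invoke inequalities 1 and 2 of the ILP problem, which give $0 \leq \alpha_j \leq \epsilon$ and $0 \leq \beta_j \leq \epsilon$ for every $j$, whence $|\alpha_j - \beta_j| \leq \epsilon$ and therefore $|u(x)| \leq \epsilon$ for all $x$. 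This is the step where the ILP differs structurally from the LP of Section~\ref{sec_gen}: the variable $\gamma$ no longer bounds the amplitude of $u$ (that role is now played by the fixed constant $\epsilon$), but instead bounds the discrimination $\Delta$.

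Combining Lemmas~\ref{lem_delta_1} and~\ref{lem_delta_2} yields $\Delta(u) = \gamma$, which is the value I need to compare against. For optimality I would argue by contradiction, mirroring the proof of Theorem~\ref{thlin}. Assume $u$ is not a minimizer and pick $w \in \arg\min_{g \in \Phi \cap Z} \Delta(g)$, so that $\Delta(w) < \Delta(u) = \gamma$. Since $w \in Z$ I may write $w(x) = w_j$ on each $S_j$, and since $w \in \Phi$ each value satisfies $|w_j| \leq \epsilon$. The key construction is to split each $w_j$ into a nonnegative and a nonpositive part: set $\alpha'_j = w_j$, $\beta'_j = 0$ when $w_j \geq 0$, and $\alpha'_j = 0$, $\beta'_j = -w_j$ otherwise, and put $\gamma' = \Delta(w)$. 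By design $\alpha'_j - \beta'_j = w_j$ in both cases, and $0 \leq \alpha'_j, \beta'_j \leq |w_j| \leq \epsilon$.

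It then remains to verify that $(\alpha'_1,\dots,\alpha'_m)$, $(\beta'_1,\dots,\beta'_m)$ and $\gamma'$ satisfy all four families of ILP inequalities. Inequalities 1 and 2 follow directly from the bounds just noted. For inequalities 3 and 4 I would rewrite the left-hand side using $\Sigma_{j=1}^{j=m} (\alpha'_j - \beta'_j) v(i,j) = \Sigma_{j=1}^{j=m} w_j \int_{x \in S_j} p_i(x)\,dx = \int_{x \in S} p_i(x) w(x)\,dx$, so that the two inequalities reduce exactly to $\pm\bigl(\int_{x \in S} p_i(x) w(x)\,dx - b_i\bigr) \leq \gamma'$, i.e.\ to $\Delta_i(w) \leq \Delta(w) = \gamma'$, which holds by definition of $\Delta$. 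This produces a feasible ILP point with objective $-\gamma' > -\gamma$, contradicting the optimality of $(\alpha_1,\dots,\alpha_m)$, $(\beta_1,\dots,\beta_m)$ and $\gamma$ for the ILP problem, and the theorem follows.

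I expect no genuine obstacle here, since the argument is a faithful translation of the feasibility-plus-contradiction scheme already used for Theorem~\ref{thlin}. The one point deserving care is the feasibility claim $u \in \Phi$: one must remember that in the inverse formulation the amplitude constraint is enforced by inequality 2 against the constant $\epsilon$ rather than against $\gamma$, so that the roles of $\gamma$ and $\epsilon$ are effectively swapped relative to the forward case. Getting this bookkeeping right is what makes the reduction go through.
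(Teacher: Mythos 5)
Your proposal is correct and follows essentially the same route as the paper's proof: feasibility of $u$ in $\Phi \cap Z$ via inequalities 1 and 2, the identity $\Delta(u) = \gamma$ from Lemmas~\ref{lem_delta_1} and~\ref{lem_delta_2}, and a contradiction obtained by splitting a better $w$ into nonnegative parts $\alpha'_j, \beta'_j$ with $\gamma' = \Delta(w) < \gamma$. Your spelled-out justification that $0 \leq \alpha_j, \beta_j \leq \epsilon$ implies $|\alpha_j - \beta_j| \leq \epsilon$ is in fact slightly more explicit than the paper's one-line appeal to inequalities 2.
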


\begin{proof}
By definition, $u \in Z$.
According to inequalities 2, $u \in \Phi$.
Thus, $u \in \Phi \cap Z$.

Now, suppose the opposite of the claim:
$u \notin \arg\min_{g \in \Phi \cap Z} \Delta(g)$.
Let $w \in \arg\min_{g \in \Phi \cap Z} \Delta(g)$.

Let $(w_1,\dots,w_m)$ be such that,
$\forall x \in S_j$,
$w(x) = w_j$.
Let $\gamma' = \Delta(w)$.
$\forall j \in \{1,\dots,m\}$,
we define
$(\alpha_1',\dots,\alpha_m')$
and
$(\beta_1',\dots,\beta_m')$
as follows:

\begin{itemize}
\item If $w_j \geq 0$, $\alpha'_j = w_j$
and $\beta'_j = 0$.

\item Otherwise, $\alpha'_j = 0$
and $\beta'_j = -w_j$.
\end{itemize}

By construction, inequalities 1 are satisfied.

As $w \in \Phi$,
$\forall j \in \{1,\dots,m\}$,
$|w_j| \leq \epsilon$.
Thus, $\forall j \in \{1,\dots,m\}$,
$\alpha'_j \leq |w_j| \leq \epsilon$
and
$\beta'_j \leq |w_j| \leq \epsilon$.
Therefore, inequalities 2 are satisfied.

As $\gamma = \Delta(w)$,
$\forall i \in \{1,\dots,n\}$,
$| \int_{x \in S} p_i(x) w(x) dx - b_i |
= |\Sigma_{j=1}^{j=m} \alpha'_j  v(i,j) - \Sigma_{j=1}^{j=m} \beta'_j  v(i,j) - b_i| \leq \Delta(w) = \gamma'$.
Thus, inequalities 3 and 4 are satisfied.

As $w \in \arg\min_{g \in \Phi \cap Z} \Delta(g)$ and
$u \notin \arg\min_{g \in \Phi \cap Z} \Delta(g)$,
we have $\Delta(w) < \Delta(u)$.
We have $\Delta(w) = \gamma'$,
and according to Lemma~\ref{lem_delta_1}
and Lemma~\ref{lem_delta_2},
$\Delta(u) = \gamma$.
Thus, $\gamma' < \gamma$.
Thus,
$(\alpha_1,\dots,\alpha_m)$,
$(\beta_1,\dots,\beta_m)$
and $\gamma$
do not solve the ILP problem: contradiction.
Thus, the result.
\end{proof}

\section{Conclusion}

\label{sec_conc}

We consider the problem of removing algorithmic discrimination between several populations with a minimal individual error. We first describe an analytical solution to this problem in the case of two populations. We then show that the general case (with $n$ populations) can be solved approximately with linear programming. We also consider the inverse problem where an upper bound on the error is fixed and we seek to minimize the discrimination.

A major challenge would be to either find an analytical solution to the general case with $n$ populations or prove that it is indeed intractable. We conjecture the latter.
Another interesting question would be to determine how to optimally choose the subsets $(S_1,\dots,S_m)$ used for the approximate solution.

\bibliographystyle{plain}
\bibliography{biblio}

\end{document}